\newcites{latex}{References}
\DeclareMathOperator*{\argmin}{arg\,min}
\newcommand{\cmark}{\ding{51}}%
\newcommand{\xmark}{\ding{55}}%
\crefname{section}{Sec.}{Section}
\newtheorem{theorem}{Theorem}[section]
\newtheorem{proposition}[theorem]{Proposition}
\newtheorem{corollary}[theorem]{Corollary}
\newtheorem{assumption}[theorem]{Assumption}
\theoremstyle{definition}
\newtheorem{definition}[theorem]{Definition}
\newtheorem{remark}[theorem]{Remark}
\newcommand\blfootnote[1]{%
  \begingroup
  \renewcommand\thefootnote{}\footnote{#1}%
  \addtocounter{footnote}{-1}%
  \endgroup
}
\xpatchcmd{\proof}{\topsep6\p@\@plus6\p@\relax}{}{}{}
\title{Fast Axiomatic Attribution for Neural Networks}
\author{%
  Robin Hesse\textsuperscript{1}
   \And
   Simone Schaub-Meyer\textsuperscript{1}
   \And
   Stefan Roth\textsuperscript{1,2} \\
   \and
   \textsuperscript{1}Department of Computer Science, TU Darmstadt \quad \textsuperscript{2}hessian.AI\\
   \texttt{\{robin.hesse, simone.schaub, stefan.roth\}@visinf.tu-darmstadt.de}
}
\begin{document}

\maketitle

\newcommand{\zerobaseline}{\mathbf{0}}

\newcommand{\weightmatrix}{W}
\newcommand{\activationfun}{\phi}
\newcommand{\bias}{b}
\newcommand{\inputvar}{x}
\newcommand{\modelfunction}{F}
\newcommand{\numberlayers}{m}
\newcommand{\iterator}{i}

\newcommand{\inputvarlocali}{z}
\newcommand{\pool}{\psi }

\newcommand{\explainable}{\mathcal{X}}
\newcommand{\attribution}{\mathcal{A}}
\newcommand{\R}{\mathbb{R}}

\newcommand{\inputxgradient}{Input$\times$Gradient}

\newcommand{\eg}{\emph{e.g.}}
\newcommand{\ie}{\emph{i.e.}}
\newcommand{\cf}{\emph{cf.}}
\newcommand{\vs}{\emph{vs.}}
\newcommand{\etal}{\emph{et al.}}
\newcommand{\wrt}{\emph{w.r.t.}}
\newcommand{\wlogs}{\emph{w.l.o.g.}}

\newcommand{\myparagraph}[1]{\textbf{#1}\hspace{0.5em}}
\newcommand{\myparagraphnospace}[1]{\textbf{#1}\hspace{0.5em}}

\begin{abstract}
Mitigating the dependence on spurious correlations present in the training dataset is a quickly emerging and important topic of deep learning. 
Recent approaches include priors on the feature attribution of a deep neural network (DNN) into the training process to reduce the dependence on unwanted features. 
However, until now one needed to trade off high-quality attributions, satisfying desirable axioms, against the time required to compute them. 
This in turn either led to long training times or ineffective attribution priors. 
In this work, we break this trade-off by considering a special class of \emph{efficiently axiomatically attributable DNNs} for which an axiomatic feature attribution can be computed with only a single forward/backward pass.
We formally prove that \emph{nonnegatively homogeneous DNNs}, here termed \emph{$\explainable$-DNNs}, are efficiently axiomatically attributable and show that they can be effortlessly constructed from a wide range of regular DNNs by simply removing the bias term of each layer.
Various experiments demonstrate the advantages of $\explainable$-DNNs, beating state-of-the-art generic attribution methods on regular DNNs for training with attribution priors.\blfootnote{Code and additional resources at \url{https://visinf.github.io/fast-axiomatic-attribution/}.}
\end{abstract}

\section{Introduction}
Many traditional machine learning (ML) approaches, such as linear models or decision trees, are inherently explainable~\cite{Arrieta:2020:XAI}. 
Therefore, an ML practitioner can comprehend why a method yields a particular prediction and correct the method if the explanation for the result is flawed.
The prevailing ML architectures in use today~\cite{LeCun:2015:DL,Schmidhuber:2015:DLN}, namely deep neural networks (DNNs), unfortunately, do not come with this inherent explainability.
This can cause models to depend on dataset biases and spurious correlations in the training data. 
For real-world applications, \eg,~credit score or insurance risk assignment, this can be highly problematic and potentially lead to models discriminating against certain demographic groups~\cite{Angwin, Obermeyer447}. 
To mitigate the dependence on spurious correlations in DNNs, attribution priors have been recently  proposed~\cite{Erion:2021:IPD,Rieger:2020:IUP,Ross:2017:RRR}. 
By enforcing priors on the feature attribution of a DNN at training time, they allow actively controlling its behavior.
As it turns out, attribution priors are a very flexible tool, allowing even complex model interventions such as making an object recognition model focus on shape~\cite{Rieger:2020:IUP} or less sensitive to high-frequency noise~\cite{Erion:2021:IPD}. 
However, their use brings new challenges over regular training. 
First, computing the feature attribution of a DNN is a nontrivial task. 
It is critical to use an attribution method that faithfully reflects the true behavior of the deep network and ideally satisfies the axioms proposed by \citet{Sundararajan:2017:AAD}. 
Otherwise, spurious correlations may go undetected and the attribution prior would be ineffective. 
Second, since the feature attribution is used in each training step, it needs to be efficiently computable. 
Existing work incurs a trade-off between high-quality attributions for which formal axioms hold~\cite{Sundararajan:2017:AAD} and the time required to compute them~\cite{Erion:2021:IPD}. 
Prior work on attribution priors thus had to choose whether to rely on high-quality feature attributions \textit{or} allow for efficient training.
In this work, we obviate this trade-off.

Specifically, we make the following contributions:
\emph{(i)} We propose to consider a special class of DNNs, termed \emph{efficiently axiomatically attributable DNNs}, for which we can compute a closed-form axiomatic feature attribution that satisfies the axioms of \citet{Sundararajan:2017:AAD}, requiring only one gradient evaluation. As a result, we can compute axiomatic high-quality attributions with only one forward/backward pass, and hence, require only a fraction of the computing power that would be needed for regular DNNs, which involve a costly numerical approximation of an integral \citep{Erion:2021:IPD,Sundararajan:2017:AAD}. This significant improvement in efficiency 
makes our considered class of DNNs particularly well suited for scenarios where the attribution is used in the training process, such as for training with attribution priors.
\emph{(ii)} We formally prove that \emph{nonnegatively homogeneous DNNs} (termed $\explainable$-DNNs) are efficiently axiomatically attributable DNNs and establish a new theoretical connection between \inputxgradient~\citep{Shrikumar:2016:NJB} and Integrated Gradients~\citep{Sundararajan:2017:AAD} for nonnegatively homogeneous DNNs of different degrees.
\emph{(iii)} We show how $\explainable$-DNNs can be instantiated from a wide range of regular DNNs by simply removing the bias term of each layer.
While this may seem like a significant restriction, we show that the impact on the predictive accuracy in two different application domains is surprisingly minor.
In a variety of experiments, we demonstrate the advantages of $\explainable$-DNNs, showing that they \emph{(iv)} admit accurate axiomatic feature attributions at a fraction of the computational cost and \emph{(v)} beat state-of-the-art generic attribution methods for training regular networks with an attribution prior.

\section{Related work}
\myparagraphnospace{Attribution methods} can roughly be divided into perturbation-based~\cite{Li:2016:UNN, Zeiler:2014:VUC, Zhou:2015:PEN, Zintgraf:2017:VDN} and backpropagation-based~\cite{Ancona:2018:TBU, Bach:2015:PWE, Shrikumar:2017:LIF, Simonyan:2014:DIC, Sundararajan:2017:AAD} methods.
The former repeatably perturb individual inputs or neurons to measure their impact on the final prediction. 
Since each perturbation requires a separate forward pass through the DNN, those methods can be computationally inefficient~\cite{Shrikumar:2017:LIF} and consequently inappropriate for inclusion into the training process.
We thus consider \emph{backpropagation-based methods} or, more precisely, gradient-based and rule-based attribution methods. 
They propagate an importance signal from the DNN output to its input using either the gradient or predefined rules, making them particularly efficient~\cite{Shrikumar:2017:LIF}, and thus, well suited for inclusion into the training process. 
Gradient-based methods have the advantage of scaling to high-dimensional inputs, can be efficiently implemented using GPUs, and directly applied to any differentiable model without changing it~\cite{Ancona:2019:GBA}.

The saliency method~\cite{Simonyan:2014:DIC}, defined as the absolute input gradient, is an early gradient-based attribution method for DNNs.
\citet{Shrikumar:2016:NJB} propose the \inputxgradient\ method, \ie,~weighting the (signed) input gradient with the input features, to improve sharpness of the attributions for images.
\citet{Bach:2015:PWE} introduce the rule-based Layerwise Relevance Propagation (LRP), with predefined backpropagation rules for each neural network component. 
As it turns out, LRP without modifications to deal with numerical instability can be reduced to \inputxgradient\ for DNNs with ReLU~\cite{Nair:2010:RLI} activation functions~\cite{Ancona:2018:TBU, Shrikumar:2016:NJB}, and hence, can be expressed in terms of gradients as well. 
DeepLIFT~\cite{Shrikumar:2017:LIF} is another rule-based approach similar to LRP, relying on a neutral baseline input to assign contribution scores relative to the difference of the normal activation and reference activation of each neuron. 
Generally, rule-based approaches have the disadvantage that each DNN component requires custom modules that may have no GPU support and require a model re-implementation.

\myparagraph{Axiomatic attributions.}
As it is hard to empirically evaluate the quality of feature attributions, \citet{Sundararajan:2017:AAD} propose several axioms that high-quality attribution methods should satisfy: 
\begin{compactdesc}
    \item[Sensitivity (a)] is satisfied if for every input and baseline that differ in one feature but have different predictions, the differing feature should be given a non-zero attribution.
    \item[Sensitivity (b)] is satisfied if the function implemented by the deep network does not depend (mathematically) on some variable, then the attribution to that variable is always zero.
    \item[Implementation invariance] is satisfied if the attributions for two functionally equivalent networks are always identical.
    \item[Completeness] is satisfied if the attributions add up to the difference between the output of the network for the input and for the baseline.
    \item[Linearity] is satisfied if the attribution of a linearly composed deep network $a \modelfunction_1 + b \modelfunction_2$ is equal to the weighted sum of the attributions for $\modelfunction_1$ and $\modelfunction_2$ with weights $a$ and $b$, respectively.
    \item[Symmetry preservation] is satisfied if for all inputs and baselines that have identical values for symmetric variables, the symmetric variables receive identical attributions.
\end{compactdesc}
\cite{Sundararajan:2017:AAD} shows that none of the above methods satisfy all axioms, \eg,~the saliency method and \inputxgradient\ can suffer from the well-known problem of gradient saturation, which means that even important features can have zero attribution. 
To overcome this, \cite{Sundararajan:2017:AAD} introduces Integrated Gradients, a gradient-based backpropagation method that provably satisfies these axioms; it is considered a high-quality attribution method to date~\cite{Erion:2021:IPD, Liu:2019:IPF}.
Its crucial disadvantage over previous methods is that an integral has to be solved, which generally requires an approximation based on $\sim$\,20--300 gradient calculations, making it correspondingly computationally more expensive than, \eg, \inputxgradient.

\myparagraph{Attribution priors.}
The above attribution methods cannot only be used for explaining a model's behavior but also to actively control the behavior. 
To that end, the training objective can be formulated as
\begin{equation}
    \theta^\ast = \argmin_\theta \frac{1}{|X|}\sum_{(x,y) \in X} \mathcal{L}(\modelfunction_\theta; x, y) + \lambda \Omega(\attribution(F_\theta, x)),
\end{equation}
where a model $\modelfunction_\theta$ with parameters $\theta$ is trained on the annotated dataset $X$.
$\mathcal{L}$ denotes the regular task loss, and $\Omega$ is a scalar-valued loss of the feature attribution $\attribution$, which is called the attribution prior~\cite{Erion:2021:IPD}; $\lambda$ controls the relative weighting. 
For example, by forcing certain values of the attribution to be zero, we can mitigate the dependence on unwanted features~\cite{Ross:2017:RRR}. 
But also more complex model interventions, such as making an object recognition model focus on shape~\cite{Rieger:2020:IUP} or less sensitive to high-frequency noise~\cite{Erion:2021:IPD}, can be formulated using attribution priors. 

An early instance of this idea is the Right for the Right Reasons (RRR) approach of \citet{Ross:2017:RRR}, which uses the input gradient of the \textit{log} prediction to mitigate the dependence on unwanted features.
While this is more stable than simply using the input gradient, it still suffers from the problem of saturation.
RRR may thus not reflect the true behavior of the model, and therefore, miss relevant features. 
Subsequent work addresses this issue using axiomatic feature attribution methods, specifically Integrated Gradients~\cite{Chen:2019:RAR, Liu:2019:IPF, Sundararajan:2017:AAD}, which allows for more effective attribution priors~\cite{Erion:2021:IPD} but incurs significant computational overhead, rendering them impractical for many scenarios. For example, \citet{Liu:2019:IPF} report a thirty-fold increase in training time compared to standard training.
\citet{Rieger:2020:IUP} propose an alternative attribution prior based on a rule-based contextual decomposition~\cite{Murdoch:2018:BWI, Sing:2019:HIN} (CD) as attribution method.
This allows to consider clusters of features~\cite{Erion:2021:IPD} instead of individual features and to define attribution priors working on feature groups. 
However, computing the attribution for individual features becomes computationally inefficient~\cite{Erion:2021:IPD}. 
Additionally, since CD is a rule-based attribution method, it requires custom modules and cannot be applied to all types of DNNs~\cite{Erion:2021:IPD}. 
The very recently proposed Expected Gradients~\cite{Erion:2021:IPD} method reformulates Integrated Gradients as an expectation, allowing a sampling-based approximation of the attribution. 
\citet{Erion:2021:IPD} argue that similar to batch gradient descent, where the true gradient of the loss function is approximated over many training steps, the sampling-based approximation allows to approximate the attribution over many training steps. 
This results in better attributions while using fewer approximation steps.
Even using as little as one reference sample, \ie,~only one gradient computation, can yield advantages over the regular input gradient. 
However, we show that using only one reference sample still does not yield the same attribution quality as an axiomatic feature attribution method, reflected in less effective attribution priors.
\citet{Schramowski:2020:MDN} propose a human-in-the-loop strategy to define appropriate attribution priors while training.
Our attribution method is complementary and could be used within their framework.

\section{Efficiently axiomatically attributable DNNs}
\label{sec:methods}

Formally, given a function $\modelfunction \colon \R^n \mapsto \R$, representing a single output node of a DNN, and an input $\inputvar \in \R^n$, the feature attribution for the prediction at input $\inputvar$ relative to a baseline input $\inputvar^\prime$ is a vector $\attribution(\modelfunction, \inputvar, \inputvar^\prime) \in \R^n$, where each element $a_i$ is the contribution of feature $\inputvar_i$ to the prediction $\modelfunction (\inputvar)$~\cite{Sundararajan:2017:AAD}.
In this work we seek an attribution method that is particularly well suited for scenarios where such an attribution is used at training time, \emph{e.g.}, training with attribution priors. 
As such, the attribution method should be of high quality, while being efficiently computable in a single forward/backward pass. 
Since attributions obtained from Integrated Gradients~\cite{Sundararajan:2017:AAD} have strong theoretical justifications and are known to be of high-quality~\cite{Liu:2019:IPF}, they will serve as our starting point. 
In general, however, Integrated Gradients are expensive to compute for arbitrary DNNs.
Therefore, in this work, we restrict our attention to a special sub-class of DNNs, termed \emph{efficiently axiomatically attributable DNNs}, that require only a single forward/backward pass to compute a closed-form solution of Integrated Gradients. 
We show that nonnegatively homogeneous DNNs belong to this class and use this insight to guide the design of a concrete instantiation of efficiently axiomatically attributable DNNs.
While there may be several such instantiations, we chose this particular one as it can be easily constructed from a wide range of regular DNNs by simply removing the bias term of each layer. 
This ensures comparability to prior work and allows for an easy adaptation of existing network architectures. 

\begin{definition}\label{def:01}
We call a DNN $\modelfunction\colon \R^n \mapsto \R$ \textit{efficiently axiomatically attributable} \wrt~a baseline $x^\prime\in \R^n$, if there exists a closed-form solution of the axiomatic feature attribution method Integrated Gradients $\text{IG}_i(F, x, x^\prime)$ along the $i^\text{th}$ dimension of $x\in\R^n$, requiring only one forward/backward pass.
\end{definition}
Note that all differentiable models are efficiently axiomatically attributable \wrt~the trivial baseline $x^\prime = x$. 
However, using such a baseline is not helpful. 
Instead, commonly chosen baselines are some kind of averaged input features or baselines such that $F(x^\prime)=0$, which allow an interpretation of the attribution that amounts to distributing the output to the individual input features~\cite{Sundararajan:2017:AAD}.

\begin{proposition}\label{prop:01}
For a DNN $\modelfunction\colon \R^n \mapsto \R$ there exists a closed-form solution of \textnormal{$\text{IG}_i(F, x, \mathbf{0})$} \wrt~the zero baseline $\mathbf{0} \in \R^n$, requiring only one forward/backward pass, if $\modelfunction$ is strictly positive homogeneous of degree $k\in \R_{\geq1}$, \ie,~$\modelfunction \left( \alpha \inputvar \right) = \alpha^k \modelfunction \left( \inputvar \right)$ for $\alpha \in \R_{>0}$.
\end{proposition}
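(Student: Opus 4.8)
The plan is to begin from the definition of Integrated Gradients along the $i^\text{th}$ dimension with the zero baseline and reduce its integral to a closed form using the homogeneity assumption. With $\zerobaseline$ as baseline the definition reads
\begin{equation}
\text{IG}_i(\modelfunction, \inputvar, \zerobaseline) = \inputvar_i \int_0^1 \frac{\partial \modelfunction(t\inputvar)}{\partial \inputvar_i}\, dt,
\end{equation}
so the entire task amounts to showing that this integral can be evaluated analytically. The key observation I would exploit is that if $\modelfunction$ is strictly positive homogeneous of degree $k$, then each partial derivative $\partial_i \modelfunction$ is positive homogeneous of degree $k-1$.

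To establish this, I would differentiate the defining identity $\modelfunction(\alpha\inputvar) = \alpha^k \modelfunction(\inputvar)$ with respect to $\inputvar_i$. By the chain rule the left-hand side yields $\alpha\,(\partial_i \modelfunction)(\alpha\inputvar)$, while the right-hand side yields $\alpha^k (\partial_i \modelfunction)(\inputvar)$. Equating the two and dividing by $\alpha > 0$ gives
\begin{equation}
(\partial_i \modelfunction)(\alpha\inputvar) = \alpha^{k-1}(\partial_i \modelfunction)(\inputvar),
\end{equation}
which is precisely the claimed scaling of the gradient (a differentiated form of Euler's homogeneity relation).

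Substituting $\alpha = t$ into the integrand, the factor $(\partial_i \modelfunction)(\inputvar)$ becomes independent of $t$ and can be pulled out of the integral, leaving $\int_0^1 t^{k-1}\,dt = 1/k$, which is finite since $k \geq 1$ keeps the exponent $k-1 \geq 0$ and the integrand bounded on $[0,1]$. This collapses the expression to the closed form
\begin{equation}
\text{IG}_i(\modelfunction, \inputvar, \zerobaseline) = \frac{1}{k}\,\inputvar_i\,(\partial_i \modelfunction)(\inputvar),
\end{equation}
i.e.\ a rescaled \inputxgradient\ attribution, which needs only the single gradient evaluation $(\partial_i \modelfunction)(\inputvar)$ — one forward/backward pass — and no numerical integration, as required by \cref{def:01}.

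The hard part will not be the computation, which is essentially two lines, but justifying the analytic manipulations. In particular I must ensure $\modelfunction$ is differentiable along the ray $\{t\inputvar : t \in (0,1]\}$ so that the chain-rule step and the pointwise identity for $\partial_i \modelfunction$ are valid, and confirm integrability of $t^{k-1}(\partial_i \modelfunction)(\inputvar)$ near $t=0$ — exactly where the restriction $k \geq 1$ enters to guarantee the integrand stays bounded and the closed form is unambiguous. Once differentiability away from the origin is granted, everything else is routine.
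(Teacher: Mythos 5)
Your proposal is correct and follows essentially the same route as the paper: both reduce the IG integral along the straight-line path to $\frac{1}{k}\inputvar_i\,\partial_i\modelfunction(\inputvar)$ by showing, via differentiating the homogeneity identity and the chain rule, that $\partial_i\modelfunction$ is homogeneous of degree $k-1$, and then evaluating $\int_0^1 \alpha^{k-1}\,d\alpha = 1/k$ (the paper handles the endpoint at zero with an explicit limit, matching your integrability remark). Your added attention to differentiability along the ray is a reasonable caveat the paper leaves implicit, but it does not change the argument.
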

\begin{proof}
 \citet{Sundararajan:2017:AAD} define the axiomatic feature attribution method Integrated Gradients (IG) along the $i^\text{th}$ dimension for a given model $\modelfunction$, input $\inputvar$, baseline $\mathbf{0}$, and straightline path $\gamma (\alpha) = \alpha \inputvar$ as
\begin{equation} \label{eq:ig}
\text{IG}_i (\modelfunction, \inputvar, \mathbf{0}) = \int_{0}^{1} \frac{\partial F\left( \gamma(\alpha)\right)}{\partial \gamma_i (\alpha)} \ \frac{\partial \gamma_i (\alpha)}{\partial \alpha}\ d\alpha\ = \int_{0}^{1} \frac{\partial F\left( \alpha x\right)}{\partial \alpha x_i} \ \frac{\partial \alpha x_i}{\partial \alpha}\ d\alpha\ .
\end{equation}
Assuming $F$ is strictly positive homogeneous of degree $k\geq 1$, we can write Integrated Gradients in \cref{eq:ig} as
\begin{equation} \label{eq:ig2}
\text{IG}_i (\modelfunction, \inputvar, \mathbf{0}) = \lim_{\beta \to 0} \int_{\beta}^{1} \frac{\partial F\left(\alpha \inputvar \right)}{\partial \alpha \inputvar_i}\ \inputvar_i \ d\alpha = \lim_{\beta \to 0} \int_{\beta}^{1} \alpha^{k-1}\frac{\partial F\left( \inputvar \right)}{\partial \inputvar_i}\ \inputvar_i \ d\alpha = \frac{1}{k}\inputvar_i \frac{\partial F\left( \inputvar \right)}{\partial \inputvar_i}\ .
\end{equation}
The gradient expression in \cref{eq:ig2} can be computed using a single forward/backward pass.
\end{proof}

While \citet{Ancona:2018:TBU} already found that \inputxgradient\ equals Integrated Gradients with the zero baseline for linear models or models that behave linearly for a selected task, our \cref{prop:01} is more general:
We only require strictly positive homogeneity of an arbitrary order $k\geq1$.
This allows us to consider a larger class of models including nonnegatively homogeneous DNNs, which generally are not linear.

\medskip

\begin{definition}\label{def:02}
We call a DNN $\modelfunction\colon \R^n \mapsto \R$ \textit{nonnegatively homogeneous}, if $\modelfunction(\alpha \inputvar) = \alpha \modelfunction(\inputvar)$ for all $\alpha \in \R_{\geq 0}$.
\end{definition}

\begin{corollary}\label{prop:02}
Any nonnegatively homogeneous DNN is efficiently axiomatically attributable \wrt~the zero baseline $\mathbf{0}\in\R^n$ and a closed-form solution of the axiomatic feature attribution method Integrated Gradients, requiring only one forward/backward pass, exists.
\end{corollary}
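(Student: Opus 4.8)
The plan is to reduce the corollary directly to \cref{prop:01} by recognizing nonnegative homogeneity as the degree-one special case of strict positive homogeneity. First I would recall \cref{def:02}, which states that a nonnegatively homogeneous DNN satisfies $\modelfunction(\alpha \inputvar) = \alpha \modelfunction(\inputvar)$ for all $\alpha \in \R_{\geq 0}$. Restricting this identity to strictly positive scalars $\alpha \in \R_{>0} \subset \R_{\geq 0}$ lets me rewrite it as $\modelfunction(\alpha \inputvar) = \alpha^1 \modelfunction(\inputvar)$, which is precisely the definition of strict positive homogeneity of degree $k = 1$ in the sense used by \cref{prop:01}. Hence the hypothesis of the corollary implies the hypothesis of the proposition with exponent $k=1$.

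Next I would invoke \cref{prop:01} with $k = 1$, which is admissible since $1 \in \R_{\geq 1}$. The proposition then guarantees a closed-form expression for $\text{IG}_i(\modelfunction, \inputvar, \mathbf{0})$ that is computable in a single forward/backward pass. Substituting $k = 1$ into the closed form derived in \cref{eq:ig2} collapses the prefactor $\tfrac{1}{k}$ to unity, yielding $\text{IG}_i(\modelfunction, \inputvar, \mathbf{0}) = \inputvar_i\,\tfrac{\partial \modelfunction(\inputvar)}{\partial \inputvar_i}$, i.e.,~exactly the \inputxgradient\ attribution. Since this closed form requires only one gradient evaluation, \cref{def:01} is satisfied with respect to the zero baseline, which is what the corollary asserts.

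The only point that deserves a word of care is the boundary behavior at $\alpha = 0$: nonnegative homogeneity additionally asserts $\modelfunction(\mathbf{0}) = 0$, whereas the strict positive homogeneity of \cref{prop:01} is stated only for $\alpha > 0$. This mismatch is harmless, because the proposition's hypothesis constrains only the regime $\alpha > 0$, and nonnegative homogeneity trivially implies that weaker requirement; the extra value at $\alpha = 0$ is consistent with the degree-one scaling (indeed it is forced by continuity, since $\lim_{\alpha \to 0^+}\modelfunction(\alpha \inputvar) = 0$). Consequently there is no real obstacle here: the corollary is an immediate specialization of \cref{prop:01}, and the substance of the argument is simply the identification of $k = 1$ as the relevant homogeneity degree.
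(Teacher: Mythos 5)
Your proof is correct and takes essentially the same route as the paper, which simply states that the corollary follows directly from \cref{prop:01} together with \cref{def:01,def:02}; you have merely spelled out the identification of nonnegative homogeneity with the $k=1$ case of strict positive homogeneity, including the harmless boundary point at $\alpha=0$.
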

\begin{proof}
\cref{prop:02} follows directly from \cref{prop:01} and \cref{def:01,def:02}.
\end{proof}

\medskip

\begin{definition}
We let \textit{$\explainable$-DNN} denote a nonnegatively homogeneous DNN.
Further, for any $\explainable$-DNN $\modelfunction\colon \R^n \mapsto \R$, we let \textit{$\explainable$-Gradient} ($\explainable$G) be an axiomatic feature attribution method relative to the zero baseline $\mathbf{0}\in\R^n$ defined as 
\begin{equation}
\text{$\explainable$G}_i (\modelfunction, x) = \text{IG}_i (\modelfunction, \inputvar, \mathbf{0})
= \inputvar_i \frac{\partial \modelfunction (\inputvar)}{\partial x_i}\ .
\end{equation}
\end{definition}
Note that while the formulas for the existing attribution method \inputxgradient\ \citep{Shrikumar:2016:NJB} and our novel $\explainable$-Gradient are equal, $\explainable$-Gradient is only defined for the subclass of $\explainable$-DNNs and provably satisfies axioms that are generally not satisfied by \inputxgradient. Additionally, from the nonnegative homogeneity of $\explainable$-DNNs it follows that $\explainable$-Gradient attributions are also nonnegatively homogeneous. This allows us to define another desirable axiom that is in line with intuition about how attribution should work and that is satisfied by $\explainable$-Gradient.

\medskip

\begin{definition}\label{def:nonnegativehomogeneity}
An attribution method $\attribution$ satisfies \textit{nonnegative homogeneity} if $\attribution (\modelfunction, \alpha \inputvar, \alpha\inputvar^\prime) = \alpha\  \attribution (\modelfunction, \inputvar, \inputvar^\prime)$ for all $\alpha \in \R_{\geq 0}$.
\end{definition}
\begin{table}
  \caption[Table 1]{\emph{Overview of different gradient-based DNN attribution methods and the axioms~\cite{Sundararajan:2017:AAD} that they provably satisfy.} The left-hand side methods (Integrated Gradients, Expected Gradients) induce one to two orders of magnitude of computational overhead compared to the methods on the right-hand side. The methods on the right-hand side require only one gradient evaluation (indicated by (1) for Expected Gradients with one reference sample), and thus, can be computed in a single forward/backward pass. Note how $\explainable$-Gradient satisfies all axioms while requiring as little computational cost as a simple gradient evaluation, however being only defined for $\explainable$-DNNs.}
  \label{axioms}
  \centering
  \begin{tabular*}{\textwidth}{l|cc|ccc}
    \toprule
                & Integrated      & Expected   & Expected     & (Input $\times$) &  \\
    \textbf{Axiom}       &  Gradients     &  Gradients  & Gradients(1)     & Gradient & $\explainable$-Gradient \\
    \midrule
    \textit{Sensitivity (a)}             & \cmark  & \cmark & \xmark & \xmark & \cmark \\
    \textit{Sensitivity (b)}             & \cmark  & \cmark & \cmark & \cmark & \cmark \\
    \textit{Implementation invariance}  & \cmark  & \cmark & \xmark & \cmark & \cmark \\
    \textit{Completeness}               & \cmark  & \cmark & \xmark & \xmark & \cmark \\
    \textit{Linearity}                  & \cmark  & \cmark & \xmark & \cmark & \cmark \\
    \textit{Symmetry-preserving}        & \cmark  & \cmark & \xmark & \cmark & \cmark \\
    \bottomrule
  \end{tabular*}
\end{table}
For an overview of the axioms~\cite{Sundararajan:2017:AAD} that are satisfied by popular gradient-based attribution methods, see \cref{axioms}. 
The right-hand side methods use only one gradient evaluation, and therefore, have similar computational expense. 
The left-hand side methods generally require multiple gradient evaluations until convergence, making them correspondingly computationally more expensive. 
Note that $\explainable$-Gradient satisfies all the axioms satisfied by Integrated Gradients and Expected Gradients~\cite{Erion:2021:IPD}, assuming convergence of the latter, while requiring only a fraction of the computational cost, however being only defined for $\explainable$-DNNs. Existing methods that have similar computational expense as $\explainable$-Gradient generally do not satisfy all of the axioms, and therefore, are likely to produce lower quality attributions, which can be misleading and less effective for imposing attribution priors.

\myparagraph{Constructing $\explainable$-DNNs.} With this motivation in mind, we will now study concrete instantiations of nonnegatively homogeneous DNNs.
Note that this class of DNNs has already been considered by~\citet{Zhang:2019:FIR}, however, neither at the same level of detail nor in the context of feature attributions.
We define the output of a regular feedforward DNN $\modelfunction \colon \R^n \mapsto \R^o$, for an input $\inputvar \in \R^n$, as a recursive sequence of layers $l$ that are applied to the output of the respective previous layer:
\begin{equation} \label{eq:feedforward}
\modelfunction_{l} \left( \inputvar \right) = \begin{cases}
      \pool_l \left( \activationfun_{l} \left( \weightmatrix_l \modelfunction_{l-1} (\inputvar) + \bias_l \right) \right)& \text{if}\ l \geq 1\\
      \inputvar & \text{if}\ l = 0,\\
    \end{cases} 
\end{equation}
with $\weightmatrix_l$ and $\bias_l$ being the weight matrix and bias term for layer $l$, $\activationfun_l$ being the corresponding activation function, and $\pool_l$ being the corresponding pooling function.
Both $\activationfun_l$ and $\pool_l$ are optional; alternatively they are the identity function. 
For simplicity, we assume that the last task-specific layer, \eg,~the softmax function for classification tasks, is part of the loss function. 
Further, for a cleaner notation that aligns with~\cite{Sundararajan:2017:AAD}, we assume \wlogs{} that we are only considering one output node at a time, \eg,~the logit of the target class for classification tasks. 
This yields the DNN $\modelfunction \colon \R^n \mapsto \R$ that we consider and allows us to directly compute the derivative of the model \wrt~an input feature $\inputvar_i$. 
Importantly, the above formalization comprises many popular layer types and architectures.
For example, fully connected and convolutional layers are essentially matrix multiplications~\cite{Wang:2019:BAM}, and therefore, can be expressed by \cref{eq:feedforward}. 
Skip connections can also be expressed as matrix multiplication by appending the identity matrix to the weight matrix so that the input is propagated to later layers~\cite{Wang:2019:BAM}. 
This allows us to describe even complex architectures such as the ResNet~\cite{He:2016:DRL} variant proposed by~\cite{Zhang:2019:FIR}. 
As the above definition of a DNN includes models that are generally not nonnegatively homogeneous, we have to make some assumptions.
\begin{assumption}\label{assumption:1}
The activation functions $\activationfun_l$ and pooling functions $\pool_l$ in the model are nonnegatively homogeneous. Formally, for all $\alpha \in \R_{\geq 0}\colon$
\begin{equation}
\alpha \activationfun_l(z) = \activationfun_l(\alpha z) \ \ \ \text{and}\ \ \ \alpha \pool_l (z) = \pool_l(\alpha z).
\end{equation}
\end{assumption}

\begin{proposition}\label{prop:activation}
Piecewise linear activation functions with two intervals separated by zero satisfy \cref{assumption:1}. For $\inputvarlocali= (z_1, \ldots, z_n)  \in \R^n$, these activation functions $\activationfun_l\colon \R^n \mapsto \R^n$ are defined as
\begin{equation}
\activationfun_l \left( \inputvarlocali \right) = (\activationfun_l^\prime (z_1), \ldots , \activationfun_l^\prime (z_n))\ \ \ \text{with}\ \ \  \activationfun_l^\prime(z_i) = \begin{cases}
      a_{l,1} \inputvarlocali_i & \text{if}\ \inputvarlocali_i > 0\\
      a_{l,2} \inputvarlocali_i & \text{if}\ \inputvarlocali_i \leq 0.\\
    \end{cases}
    \label{eq:pw_linear_activation}
\end{equation} 
\end{proposition}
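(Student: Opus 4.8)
The plan is to verify directly that the scalar map $\activationfun_l^\prime$ in \cref{eq:pw_linear_activation} is nonnegatively homogeneous, from which the activation-function part of \cref{assumption:1} follows immediately, since $\activationfun_l$ acts coordinate-wise on $\inputvarlocali = (z_1,\ldots,z_n)$. Thus it suffices to establish the scalar identity $\alpha\,\activationfun_l^\prime(z_i) = \activationfun_l^\prime(\alpha z_i)$ for every $z_i \in \R$ and every $\alpha \in \R_{\geq 0}$; the identical reasoning covers any pooling function $\pool_l$ of the same piecewise-linear form.

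First I would dispose of the degenerate case $\alpha = 0$. Here the right-hand side is $\activationfun_l^\prime(0)$, which falls into the branch $z_i \leq 0$ and evaluates to $a_{l,2}\cdot 0 = 0$, while the left-hand side is $0\cdot\activationfun_l^\prime(z_i) = 0$, so the two agree. For $\alpha > 0$ the key observation is that multiplication by a strictly positive scalar does not move $z_i$ across the breakpoint at zero, precisely because the two linear pieces are separated \emph{at} zero. Concretely, if $z_i > 0$ then $\alpha z_i > 0$ as well, so both sides use the first branch and $\activationfun_l^\prime(\alpha z_i) = a_{l,1}\alpha z_i = \alpha(a_{l,1} z_i) = \alpha\,\activationfun_l^\prime(z_i)$; symmetrically, if $z_i \leq 0$ then $\alpha z_i \leq 0$, both sides use the second branch, and $\activationfun_l^\prime(\alpha z_i) = a_{l,2}\alpha z_i = \alpha\,\activationfun_l^\prime(z_i)$. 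This exhausts all cases and completes the verification.

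There is no real obstacle here beyond organizing the case split; the one subtlety worth flagging is exactly why the breakpoint must sit \emph{at} zero. If the two pieces were separated at some $c \neq 0$, scaling by $\alpha$ could push $z_i$ and $\alpha z_i$ onto opposite sides of $c$, breaking the branch-preservation used above and invalidating the identity. Likewise, the case $\alpha = 0$ goes through only because each affine piece has zero intercept (the form $a_{l,j} z_i$ carries no constant term), so both pieces vanish at the origin. Standard choices such as ReLU ($a_{l,1}=1$, $a_{l,2}=0$) and Leaky ReLU arise as special cases, which is what makes \cref{assumption:1} practically nonrestrictive.
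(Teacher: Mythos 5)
Your proof is correct and follows essentially the same route as the paper's: a case split on $\alpha=0$ versus $\alpha>0$, with the key observation that positive scaling cannot move the input across the breakpoint at zero, so the active branch is preserved. Your version is slightly more explicit about the coordinate-wise reduction and the in-branch computation, but the argument is the same.
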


\begin{proposition}\label{prop:pool}
Linear pooling functions or pooling functions selecting values based on their relative ordering satisfy \cref{assumption:1}. For $\inputvarlocali = (z_1, \ldots, z_n) \in \R^n$, these pooling functions $\pool_l\colon \R^n \mapsto \R^m$ are defined as
\begin{equation}
\pool_l \left( \inputvarlocali \right) = (\pool_l^\prime (z_1^\prime), \ldots , \pool_l^\prime (z_m^\prime)),
\end{equation}
with $z_i^\prime$ being a grouping of entries in $z$ based on their spatial location and $\pool_l^\prime\colon \R^m \mapsto \R$ being linear or a selection of a value based on its relative ordering, \eg,~the maximum or minimum value.
\end{proposition}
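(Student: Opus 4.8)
The plan is to reduce the claim to a statement about the scalar map $\pool_l'$ and then verify nonnegative homogeneity separately for the two admissible forms. First I would observe that the grouping step commutes with scalar multiplication: scaling the whole vector $\inputvarlocali$ by $\alpha \geq 0$ and then forming the spatial groups $z_i'$ yields exactly $\alpha z_i'$ for each group, since grouping merely reindexes entries. Hence $\pool_l(\alpha \inputvarlocali) = (\pool_l'(\alpha z_1'), \ldots, \pool_l'(\alpha z_m'))$, and it suffices to show $\pool_l'(\alpha z_i') = \alpha\, \pool_l'(z_i')$ for every group and every $\alpha \in \R_{\geq 0}$; the componentwise factor $\alpha$ then pulls out to give $\pool_l(\alpha \inputvarlocali) = \alpha\, \pool_l(\inputvarlocali)$, which is precisely \cref{assumption:1}.

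For the first admissible form, linearity of $\pool_l'$ gives $\pool_l'(\alpha v) = \alpha\, \pool_l'(v)$ for all scalars $\alpha$, in particular for $\alpha \geq 0$, so that case is immediate, with average pooling as the prototypical example. The substantive case is selection by relative ordering, \eg~$\pool_l'(v) = \max_j v_j$ or $\min_j v_j$. Here I would invoke the fact that multiplication by a nonnegative scalar is order-preserving: if $\alpha \geq 0$ and $u \leq v$, then $\alpha u \leq \alpha v$. Consequently the index (or set of indices) attaining a given rank in $\alpha v$ coincides with that in $v$, so the selected entry of $\alpha v$ equals $\alpha$ times the selected entry of $v$, yielding $\pool_l'(\alpha v) = \alpha\, \pool_l'(v)$. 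The degenerate case $\alpha = 0$ is handled directly, since then every scaled entry is $0$ and the selected value is $0 = 0 \cdot \pool_l'(v)$.

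The main obstacle—really the only point requiring care—is the selection case, and specifically the role of the sign of $\alpha$. Order-preservation, and hence invariance of the selected index, holds only because $\alpha \geq 0$; for $\alpha < 0$ a maximum would turn into a minimum and homogeneity would fail. This is exactly why \cref{assumption:1} demands nonnegative rather than full homogeneity, and it mirrors the two-interval structure exploited for the activation functions in \cref{prop:activation}. A minor bookkeeping point is the treatment of ties in the ordering, but any fixed tie-breaking rule still selects an entry whose value scales by $\alpha$, so ties do not affect the conclusion.
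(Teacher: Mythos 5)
Your proof is correct and follows essentially the same route as the paper's: the linear case is immediate, and the selection case rests on the observation that scaling by $\alpha > 0$ preserves the relative ordering (so the same entry is selected, scaled by $\alpha$), with $\alpha = 0$ treated separately. The extra remarks on the grouping step commuting with scaling and on tie-breaking are fine elaborations the paper leaves implicit.
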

For proofs of \cref{prop:activation,prop:pool}, please refer to \cref{app:sec:proofs}.
Activation functions in \cref{prop:activation} include ReLU~\cite{Nair:2010:RLI}, Leaky ReLU~\cite{Maas:2013:RNI}, and PReLU~\cite{Kaiming:2015:DDR}. Linear pooling functions in \cref{prop:pool} include average pooling, global average pooling, and strided convolutions. Other pooling functions in \cref{prop:pool} include max pooling and min pooling~\cite{Socher:2011:DPU}, where the largest or smallest value is selected. Therefore, DNN architectures satisfying \cref{assumption:1} include, inter alia, AlexNet~\cite{Krizhevsky:2012:INC}, VGGNet~\cite{Simonyan:2015:VDC}, ResNet~\cite{He:2016:DRL} as introduced in~\cite{Zhang:2019:FIR}, and MLPs with ReLU activations. They alone have been cited well over one hundred thousand times, showing that we are considering a substantial fraction of commonly used DNN architectures. However, these architectures are generally still not nonnegatively homogeneous. It is easy to see that even for a simple linear model $F(x) = ax + b$ that can be expressed by \cref{eq:feedforward} and that satisfies \cref{assumption:1}, nonnegative homogeneity does not hold, because $0 F(x) = 0 \neq b = F(0x)$. Therefore, in a final step we set the bias term of each layer to zero. As this may seem like a significant restriction, we show in \cref{sec:experiments} that the impact on the predictive accuracy in two different application domains is surprisingly minor.
\begin{corollary}
Any regular DNN given by \cref{eq:feedforward} satisfying \cref{assumption:1} can be transformed into an $\explainable$-DNN by removing the bias term of each layer. 
\end{corollary}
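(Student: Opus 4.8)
The plan is to proceed by induction on the layer index $l$, showing that every intermediate map $\modelfunction_l$ of the bias-free network is nonnegatively homogeneous; the claim then follows by instantiating $l$ at the output layer, where $\modelfunction_l=\modelfunction$, since \cref{def:02} characterizes $\explainable$-DNNs precisely by the property $\modelfunction(\alpha\inputvar)=\alpha\modelfunction(\inputvar)$ for all $\alpha\in\R_{\geq 0}$.

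For the base case $l=0$ we have $\modelfunction_0(\alpha\inputvar)=\alpha\inputvar=\alpha\modelfunction_0(\inputvar)$ for every $\alpha\in\R_{\geq 0}$, so $\modelfunction_0$ is trivially nonnegatively homogeneous. For the inductive step I would assume $\modelfunction_{l-1}(\alpha\inputvar)=\alpha\modelfunction_{l-1}(\inputvar)$ and note that, after deleting the bias, \cref{eq:feedforward} reads $\modelfunction_l(\inputvar)=\pool_l(\activationfun_l(\weightmatrix_l\modelfunction_{l-1}(\inputvar)))$. One then chains four elementary facts, in order: the induction hypothesis, linearity of the matrix product with $\weightmatrix_l$, nonnegative homogeneity of $\activationfun_l$, and nonnegative homogeneity of $\pool_l$ (the last two supplied by \cref{assumption:1}). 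Concretely,
\begin{equation*}
\modelfunction_l(\alpha\inputvar)
= \pool_l\bigl(\activationfun_l\bigl(\weightmatrix_l\,\alpha\modelfunction_{l-1}(\inputvar)\bigr)\bigr)
= \pool_l\bigl(\activationfun_l\bigl(\alpha\,\weightmatrix_l\modelfunction_{l-1}(\inputvar)\bigr)\bigr)
= \alpha\,\pool_l\bigl(\activationfun_l\bigl(\weightmatrix_l\modelfunction_{l-1}(\inputvar)\bigr)\bigr)
= \alpha\,\modelfunction_l(\inputvar),
\end{equation*}
which closes the induction.

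The one place where the argument can break, and hence the only real obstacle, is the second equality: pulling the scalar $\alpha$ through the preactivation map $\modelfunction_{l-1}\mapsto \weightmatrix_l\modelfunction_{l-1}+\bias_l$. With a nonzero bias one has $\weightmatrix_l(\alpha\modelfunction_{l-1}(\inputvar))+\bias_l=\alpha\weightmatrix_l\modelfunction_{l-1}(\inputvar)+\bias_l\neq\alpha(\weightmatrix_l\modelfunction_{l-1}(\inputvar)+\bias_l)$ unless $\bias_l=0$, which is exactly the failure already observed for the affine scalar model $\modelfunction(\inputvar)=a\inputvar+b$. Removing the bias turns each affine preactivation into a genuinely linear (degree-one homogeneous) map, so that scalar multiplication commutes with it; every remaining equality in the chain is then guaranteed by \cref{assumption:1}, and no further regularity is required. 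Thus the bias-free network is nonnegatively homogeneous, \ie, an $\explainable$-DNN.
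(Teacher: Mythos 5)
Your proof is correct and takes essentially the same route as the paper: the paper writes the bias-free network as the composition $\modelfunction(\inputvar)=\pool_L(\activationfun_L(\weightmatrix_L(\dots(\pool_1(\activationfun_1(\weightmatrix_1\inputvar))))))$ and observes that a composition of nonnegatively homogeneous maps is nonnegatively homogeneous, which is exactly the content of your layer-wise induction. Your version merely makes the induction explicit and correctly isolates the bias term as the sole obstruction, matching the paper's own remark about $F(x)=ax+b$.
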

\begin{proof}
A DNN $\modelfunction$ with $L$ layers given by \cref{eq:feedforward} with all biases $b_l$ set to 0 can be written as $\modelfunction (x) = \pool_L (\activationfun_L(\weightmatrix_L (... (\pool_1 (\activationfun_1(\weightmatrix_1x))))))$.
As all the pooling functions $\pool_l$, activation functions $\activationfun_l$, and matrix multiplications $\weightmatrix_l$ in $\modelfunction$ are nonnegatively homogeneous, it follows that $\modelfunction \left( \alpha \inputvar \right) = \alpha \modelfunction \left( \inputvar \right)$ for all $\alpha \in \R_{\geq0}$.
\end{proof}

\myparagraph{Further discussion.}
We additionally note that our results have interesting consequences for DNNs in certain application domains, \eg,~in computer vision, as they allow to relate efficient axiomatic attributability to desirable properties of DNNs:
\begin{remark}
    If a DNN $\modelfunction\colon \R^n \mapsto \R$, taking an image $x\in\R^n$ as input, is equivariant \wrt~to the image contrast, it is efficiently axiomatically attributable.
\end{remark}
This observation follows directly from the fact that contrast equivariance implies nonnegative homogeneity.
Consequentially, contrast-equivariant DNNs for regression tasks, such as image restoration or image super-resolution, are automatically efficiently axiomatically attributable.
For classification tasks, such as image classification or semantic segmentation, contrast equivariance of the logits at the output implies efficient axiomatic attributability.
If the classification is done using a softmax, then this also implies contrast invariance of the classifier output.
In other words, there is a close relation between efficient axiomatic attributability and the desirable property of contrast equi-/invariance.
We further illustrate this experimentally in \cref{sec:experiments:homog}.

\myparagraph{Limitations.} So far, we have discussed the advantages of $\explainable$-DNNs such as being able to efficiently compute high-quality feature attributions. 
However, we also want to mention the limitations of our method. 
First, our method can only be applied to certain DNNs satisfying \cref{assumption:1}. 
Although this is a large class of models, our method is not completely model agnostic as other gradient-based attribution methods. 
Second, removing the bias term might be disadvantageous in certain scenarios. Intuitively, the bias term can be seen like an intercept in a linear function, and therefore, is important to fit given data. As a matter of fact, a DNN without bias terms will always produce a zero output for the zero input, which might be problematic. Additionally, prior work argues that the bias term is an important factor for the predictive performance of a DNN~\cite{Wang:2019:BAM}. However, these theoretical foundations are somewhat contradictory to our own practical findings. In \cref{sec:experiments}, we show that removing the bias term has less of a negative impact than perhaps expected, indicating that removing it can be a plausible intervention on the model architecture. As we cannot guarantee that this holds for all DNNs, we recommend that practitioners who plan on using our method, first make a preliminary analysis of whether removing the bias from the model at hand is plausible. Third, our method uses implicitly the zero baseline $\mathbf{0}$. 
As $\modelfunction(\mathbf{0}) = 0$, this is a reasonable choice because it can be interpreted as being neutral~\cite{Sundararajan:2017:AAD}. 
Nevertheless, other baselines could produce attributions that are better suited for certain tasks~\cite{Pan:2021:EDN, Sturmfels:2020:VIF, Wang:2021:RMM}. For example, the zero baseline will generally assign lower attribution scores to features closer to zero, which can result in misleading attributions.  
Whether the advantages outweigh the disadvantages must be decided for each application, individually. 
In \cref{sec:experiments} we demonstrate the advantages of $\explainable$-DNNs, beating state-of-the-art generic attribution methods for training with attribution priors.

\section{Experiments}\label{sec:experiments}
To demonstrate the practicability of our proposed method, we now evaluate it in various experiments using two different data domains to confirm the following points: 
\emph{(1)} It is plausible to remove the bias term in order to obtain $\explainable$-DNNs.
\emph{(2)} Our $\explainable$-Gradient method produces superior attributions compared to other efficient gradient-based attribution methods.
\emph{(3)} Our $\explainable$-Gradient method has advantages over state-of-the-art generic attribution methods for training with attribution priors.
\emph{(4)} $\explainable$-DNNs are robust to multiplicative contrast changes.

\myparagraph{Experimental setup.}
For our experiments on models for image classification, \ie,~\cref{sec:experiments:bias,sec:experiments:metrics,sec:experiments:homog}, we use the ImageNet~\cite{Russakovsky:2015:ILS} dataset, containing about 1.2 million images of 1000 different categories. We train on the training split and report numbers for the validation split.
In \cref{sec:experiments:metrics} we quantify the quality of attributions for image classification models by adapting the metrics proposed by \citet{lundberg_local_2020} to work with image data. These metrics reflect how well an attribution method captures the relative importance of features by measuring the network's accuracy or its output logit of the target class while masking out a progressively increasing fraction of the features based on their relative importance. For example, for the Keep Positive Mask (KPM) metric, the output logit of the target class should stay as high as possible while progressively masking out the least important features. As a mask we use a Gaussian blur of the original image. For a detailed description of the metrics, please refer to~\cite{lundberg_local_2020} or \cref{app:sec:exp:bench}.
If not indicated otherwise, we assume numerical convergence for Integrated Gradients and Expected Gradients, which we found to occur after $\sim$ 128 approximation steps (see \cref{app:sec:exp:convergence}).

\subsection{Removing the bias term in DNNs}\label{sec:experiments:bias}

\begin{table}
    \caption[Table 2]{\emph{Top-5 accuracy} on the ImageNet~\cite{Russakovsky:2015:ILS} validation split and mean absolute relative difference (see \cref{app:sec:exp:bias}) of \inputxgradient \ for regular DNNs resp. $\explainable$-Gradient for $\explainable$-DNNs to the numerical approximation of Integrated Gradients. Note how removing the bias ($\explainable$-DNN) impairs the accuracy only marginally while reducing the mean absolute relative difference to Integrated Gradients significantly, confirming our theoretical finding that $\explainable$-Gradient equals Integrated Gradients.}
      \label{table:exp1}
    \centering
    \begin{tabular*}{\textwidth}{@{\extracolsep{\fill}} lcccccc}
        \toprule
        \multicolumn{1}{c}{} & \multicolumn{3}{c}{Top-5 accuracy (\%, $\uparrow$)} & \multicolumn{3}{c}{Mean absolute relative difference (\%, $\downarrow$)}
        \\
        \cmidrule(r){2-4} \cmidrule(r){5-7}
        Model & AlexNet & VGG16 & ResNet-50 & AlexNet & VGG16 & ResNet-50\\
        \midrule
        Regular DNN & \textbf{79.21} & \textbf{90.44} & \textbf{92.56} & 79.0 & 97.8 & 93.8 \\
        $\explainable$-DNN & 78.54 & 90.25 & 91.12 & \textbf{1.2} & \textbf{0.4} & \textbf{0.0} \\
        \bottomrule
    \end{tabular*}
    
\end{table}

Historically, the bias term plays an important role and almost all DNN architectures use one. 
In this first experiment, we evaluate how much removing the bias to obtain an $\explainable$-DNN affects the accuracy of different DNNs. 
To this end, we train multiple popular image classification networks, AlexNet~\cite{Krizhevsky:2012:INC}, VGG16~\cite{Simonyan:2015:VDC}, and the ResNet-50 variant of \cite{Zhang:2019:FIR}, as well as their corresponding $\explainable$-DNN variants obtained by removing the bias term, on the challenging ImageNet~\cite{Russakovsky:2015:ILS} dataset.
The resulting top-5 accuracy on the validation split is given in \cref{table:exp1}. 
As we can observe, removing the bias decreases the accuracy of the models only marginally.
This is a somewhat surprising result since prior work indicates that the bias term in DNNs plays an important role~\cite{Wang:2019:BAM}. 
We hypothesize that when removing the bias term, the DNN learns some kind of layer averaging strategy that compensates for the missing bias. 
For an additional comparison between a DNN with bias and its corresponding $\explainable$-DNN in a non-vision domain, see \cref{sec:experiments:sparsity}, which mirrors our findings here. 
Additionally, to empirically validate our finding that $\explainable$-Gradient ($\explainable G$) equals Integrated Gradients for $\explainable$-DNNs, we report the mean absolute relative difference (see \cref{app:sec:exp:bias}) between the attribution obtained from Integrated Gradients~\cite{Sundararajan:2017:AAD} and the attribution obtained from computing \inputxgradient\ for regular DNNs resp.~$\explainable$-Gradient for $\explainable$-DNNs over the ImageNet validation split. 
For regular models with biases, Integrated Gradients produce a very different attribution compared to \inputxgradient. 
For $\explainable$-DNNs on the other hand, the two attribution methods are virtually identical, as expected. 
The small deviation can be explained by the fact that the result of Integrated Gradients~\cite{Sundararajan:2017:AAD} is computed via numerical approximation, whereas our method computes the exact integral (of course only for $\explainable$-DNNs).
We make the pre-trained $\explainable$-DNN models publicly available to promote a wide adoption of efficiently axiomatically attributable models.

\subsection{Benchmarking gradient-based attribution methods}\label{sec:experiments:metrics} 

\begin{table}
  \caption[Table 3]{\emph{Metrics of~\citet{lundberg_local_2020} to measure the attribution quality} of different attribution methods. Please refer to the experimental setup in the beginning of \cref{sec:experiments}\ and \cref{app:sec:exp:bench} for an introduction of the metrics. We evaluate Integrated Gradients (IG)~\cite{Sundararajan:2017:AAD}, random attributions (Random), input gradient attributions (Grad), Expected Gradients (EG)~\cite{Erion:2021:IPD}, and our novel $\explainable$-Gradient ($\explainable$G) attribution on a regular AlexNet~\cite{Zhang:2019:FIR} and the corresponding $\explainable$-AlexNet. 
  The numbers in parentheses indicate the required gradient calls. Our method is on par with IG in terms of quality while requiring two orders of magnitude less computational power.
  }
  \label{exp:metrics}
  \centering
  \begin{tabular*}{\textwidth}{@{\extracolsep{\fill}}lcccccccc}
    \toprule
    \multicolumn{1}{c}{} & \multicolumn{4}{c}{AlexNet} & \multicolumn{4}{c}{$\explainable$-AlexNet}\\
    \cmidrule(r){2-5} \cmidrule(r){6-9}
    Method     & KPM $\uparrow$     & KNM $\downarrow$  & KAM $\uparrow$     & RAM $\downarrow$ & KPM $\uparrow$     & KNM $\downarrow$  & KAM $\uparrow$     & RAM $\downarrow$ \\
    \midrule
    IG (128) & \textbf{7.57}  & \textbf{1.67} & \textbf{25.22} & \textbf{11.12} & \textbf{7.38} & \textbf{2.21} & 21.79 & 11.68   \\
    \midrule
    Random     & 3.68 & 3.68 & 14.12 & 14.10 & 3.81 & 3.81 & 13.52 & 13.50      \\
    Grad (1)     & 3.62 & 3.88 & 20.78 & 11.82& 3.87 & 4.34 & 19.75 & \textbf{11.25}  \\
    EG (1)     & 4.92 & 2.97 & 20.49 & 13.76 & 5.41 & 3.19 & 19.47 & 13.19  \\
    $\explainable$G (1)     & N/A & N/A & N/A & N/A & \textbf{7.38} & \textbf{2.21} & \textbf{21.83} & 11.68 \\
    \bottomrule
  \end{tabular*}
\end{table}

As prior work~\cite{Erion:2021:IPD,Liu:2019:IPF} but also our experiment in \cref{sec:experiments:sparsity} suggest that the quality of an attribution method positively impacts the effectiveness of attribution priors, we benchmark our method against existing gradient-based attribution methods that are commonly used for training with attribution priors.
For evaluation, we use the metrics from \cite{lundberg_local_2020} adapted to work with image data. Using these metrics allows for a diverse assessment of the feature importance~\cite{lundberg_local_2020} and ensures consistency with the experimental setup in~\cite{Erion:2021:IPD}.
\cref{exp:metrics} shows the resulting numbers for a regular AlexNet and our corresponding $\explainable$-AlexNet. Due to the axioms satisfied by the Integrated Gradients method, it produces the best attributions for the regular network, which is in line with the results in~\cite{Yeh:2019:OIS}. However, as it approximates an integral where each approximation step requires an additional gradient evaluation, it also introduces one to two orders of magnitude of computational overhead compared to the other methods (\citet{Sundararajan:2017:AAD} recommend 20--300 gradient evaluations to approximate attributions). For the $\explainable$-AlexNet, however, our $\explainable$-Gradient method is on par  with Integrated Gradients and produces the best attributions while requiring only one gradient evaluation, and therefore, a fraction of the compute power. Since the input gradient and Expected Gradients~\cite{Erion:2021:IPD} with only one reference sample do not satisfy many of the desirable axioms (see \cref{axioms}), they produce clearly lower quality attributions as expected. Note that high-qualitative attribution methods should perform well across all the listed metrics, which is why the input gradient is not a competitive attribution method even though it performs well on the RAM metric. To conclude, we can see that our $\explainable$-Gradient attribution yields a significant improvement in quality compared to state-of-the-art generic attribution methods that require similar computational cost. This suggests that our effort to produce an efficient and high-quality attribution method is justified and accomplished.

\subsection{Training with attribution priors}\label{sec:experiments:sparsity}
\begin{figure}
  \centering
  \includegraphics[width=1.0\linewidth]{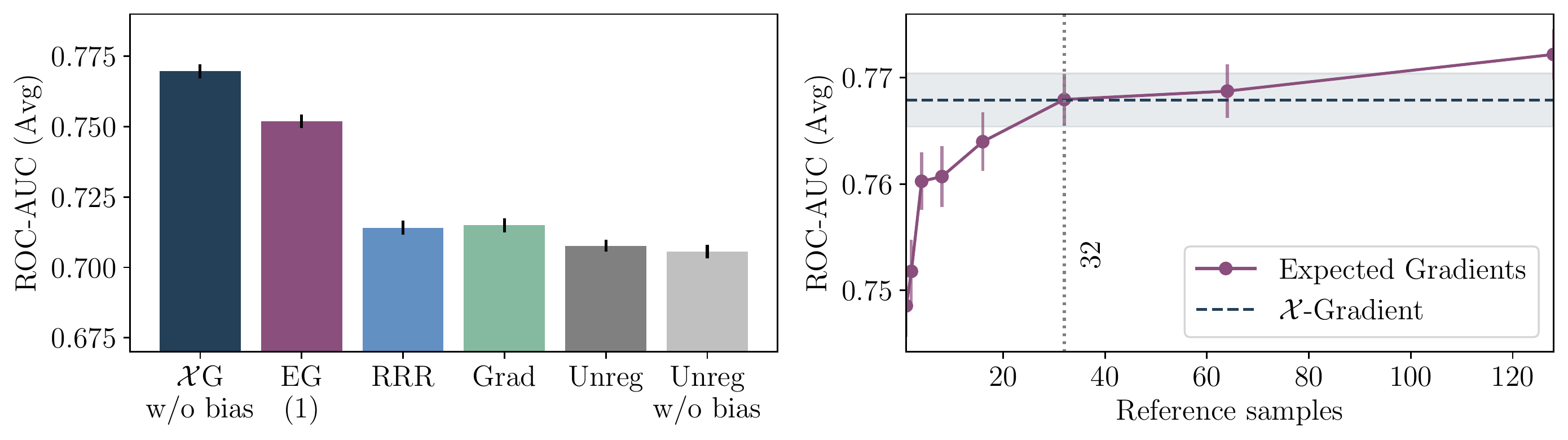}
  \caption[Figure 1]{(left) \emph{Average ROC-AUC} across 200 randomly subsampled datasets for the same attribution prior using different attribution methods. ``w/o bias'' denotes that the bias term has been removed from the MLP. (right) \emph{Average ROC-AUC} across 200 randomly subsampled datasets of Expected Gradients (EG) over the number of reference samples. 
  The current state-of-the-art EG requires approximately 32 reference samples, and thus, 32 times more computational power to outmatch $\explainable$G. Confidence intervals indicate two times the standard error of the mean.}\label{fig:sparsity}
\end{figure}

To benchmark our approach against other attribution methods when training with attribution priors, we replicate the sparsity experiment introduced in~\cite{Erion:2021:IPD}. 
To that end, we employ the public NHANES I survey data~\cite{NHANES} of the CDC of the United States, containing 118 one-hot encoded medical attributes, \eg,~age, sex, and vital sign measurements, from 13,000 human subjects (no personally identifiable information). The objective of the binary classification task is to predict if a human subject will be dead (0) or alive (1) ten years after the data was measured. A simple MLP with ReLU activations is used as the model. 
Therefore, it can be transformed into an $\explainable$-DNN by simply removing the bias terms. 
To emulate a setting of scarce training data and to average out variance, we randomly subsample 200 training and validation datasets containing 100 data points from the original dataset. 
\citet{Erion:2021:IPD} proposed a novel attribution prior that maximizes the Gini coefficient, \ie,~minimizes the statistical dispersion, of the feature attributions. They show  
that this allows to learn sparser models, which have improved generalizability on small training datasets. The more faithfully the attribution reflects the true behavior of the model, the more effective the attribution prior should be. 

\myparagraph{Comparing attribution methods.} We compare different attribution methods that have previously been used for training with attribution priors and require only one gradient evaluation;  thus, they have comparable computational cost. The results in \cref{fig:sparsity}(left) 
show that our method ($\explainable$G w/o bias) outperforms all other competing methods. 
We can also see that for the unregularized model, removing the bias (Unreg w/o bias) has almost no effect on the average ROC-AUC of the method, once again showing that our modification for making attributions efficient, \ie,~removing the bias term, is plausible in various scenarios. 

Since the attribution quality of Expected Gradients can be improved using more reference samples, as this yields a better approximation to the true integral, we plot the average ROC-AUC of Expected Gradients over the number of reference samples used in \cref{fig:sparsity}(right). 
We can clearly see that adding more samples improves the ROC-AUC when training with an EG attribution in the prior, yet again, showing that higher quality attributions lead to more effective attribution priors~\cite{Erion:2021:IPD}. 
However, we also find that approximately 32 reference samples are needed, and hence 32 times more computational power, to match the quality of our efficient $\explainable$-Gradient method. 
When using more than 32 reference samples, Expected Gradients slightly outperform our method in terms of ROC-AUC, which is due to the limitations discussed in \cref{sec:methods} (fixed baseline, no bias terms).
We argue that it is often worth accepting this small accuracy disadvantage in light of the significant gain in efficiency of computing high-quality feature attributions.

To put this improvement in efficiency into perspective, we measure the computation time of training a ResNet-50 on the ImageNet dataset when using Expected Gradients with 32 reference samples and $\explainable$-Gradient (see \cref{app:sec:exp:training}). Using a single GPU, the computational overhead introduced when using Expected Gradients with 32 reference samples amounts to an approximately $130$-fold increase in the required computation time compared to training with $\explainable$-Gradient, and thus, would turn several days of training into several months of training.

\subsection[Homogeneity of X-DNNs]{Homogeneity of $\explainable$-DNNs}\label{sec:experiments:homog}

\begin{figure}
  \centering
  \includegraphics[width=1.0\linewidth]{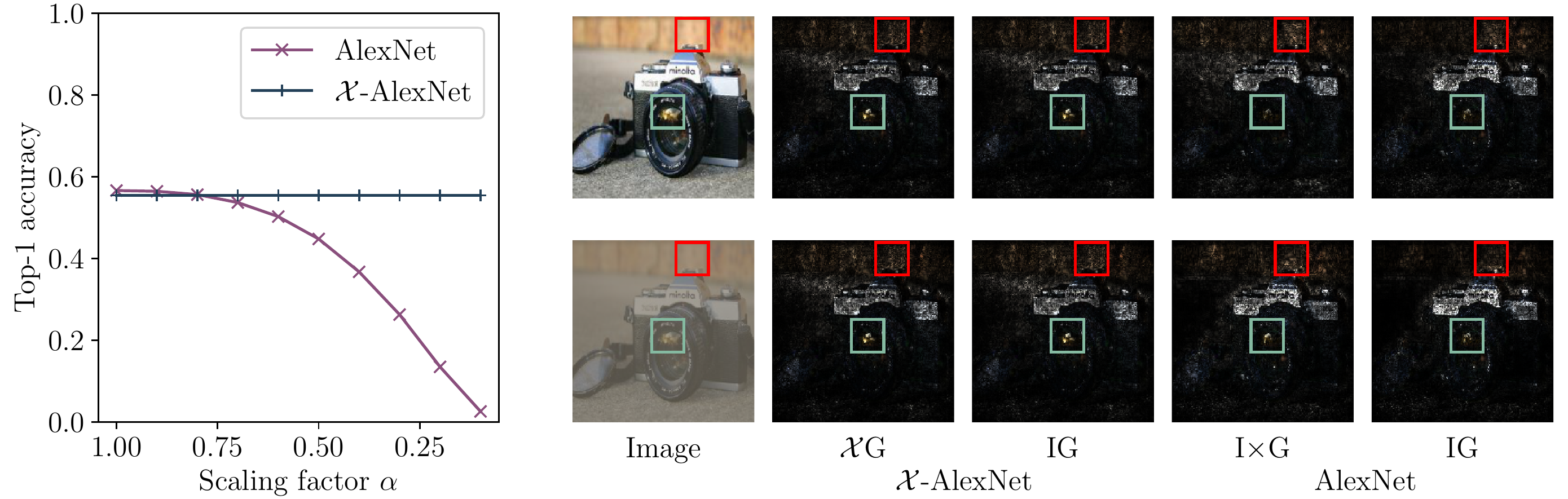}
  \caption[Figure 2]{(left) \emph{Top-1 accuracy} for AlexNet and $\explainable$-AlexNet on the ImageNet validation split with decreasing contrast (scaled by $\alpha$). Due to the nonnegative homogeneity of $\explainable$-AlexNet, the accuracy does not drop when reducing the contrast. 
  (right) \emph{Qualitative examples} of normalized attributions for $\explainable$-AlexNet and AlexNet using the attribution methods $\explainable$-Gradient ($\explainable$G) resp.~\inputxgradient~(I$\times$G) as well as Integrated Gradients (IG). The displayed attributions obtained from $\explainable$-AlexNet are almost identical, while attributions obtained from AlexNet differ significantly (see highlighted areas).}\label{fig:exp4}
\end{figure}

The fundamental difference between $\explainable$-DNNs and regular DNNs is the nonnegative homogeneity of the former. To show implications on the model and its attributions, we conduct the following experiment. Similarly to \citet{Hendrycks:2019:BNN}, we reduce the contrast of the ImageNet~\cite{Russakovsky:2015:ILS} validation split by multiplying each image with varying factors $\alpha$ and report the top-1 accuracy of AlexNet and the corresponding $\explainable$-AlexNet from \cref{sec:experiments:bias}, \ie, they have not been trained specifically to handle contrast changes. Results can be found in \cref{fig:exp4}(left). We can observe that decreasing the image contrast leads to a strong drop in accuracy of a regular AlexNet. 
On the other hand, due to the equivariance to contrast of $\explainable$-DNNs, the accuracy of the $\explainable$-AlexNet is unaffected, showing improved robustness towards multiplicative contrast changes. 
To give some qualitative examples, in \cref{fig:exp4}(right) we plot the attributions for the output logit of the target class (`reflex camera') for a regular AlexNet and an $\explainable$-AlexNet for an original image and the corresponding low-contrast image obtained by multiplying the normalized image with $\alpha = 0.3$. 
For the $\explainable$-AlexNet, our $\explainable$-Gradient method and Integrated Gradients~(IG)~\cite{Sundararajan:2017:AAD} produce attributions that are identical up to a small approximation error; reducing the image contrast keeps the attributions unchanged up to a scaling factor (not visible due to normalization for display purposes). 
However, the displayed attributions from the regular AlexNet differ significantly between \inputxgradient~(I$\times$G) and IG, as well as for different contrasts (see highlighted areas).
We argue that the above properties of $\explainable$-DNNs generally reflect desirable properties and show that they behave more predictably with contrast changes than regular DNNs.
Also note how high feature attribution scores can arise in the background (\eg, red box), showing how DNN predictions can depend of parts of the input that do not appear salient for the category; this highlights a possible use case for attribution priors~\cite{Rieger:2020:IUP,Ross:2017:RRR} enabled by our approach.

\section{Conclusion and broader impact}\label{sec:conclusion}
In this work, we consider a special class of efficiently axiomatically attributable DNNs, for which an axiomatic feature attribution can be computed in a single forward/backward pass. 
We show that nonnegatively homogeneous DNNs, termed $\explainable$-DNNs, are efficiently axiomatically attributable and establish a new theoretical connection between \inputxgradient~\cite{Shrikumar:2016:NJB} and Integrated Gradients~\cite{Sundararajan:2017:AAD} for nonnegatively homogeneous DNNs of different degrees.
Moreover, we show that many commonly used architectures can be transformed into $\explainable$-DNNs by simply removing the bias term of each layer, which has a surprisingly minor impact on the accuracy of the model in two application domains. 
The resulting efficiently computable and high-quality feature attributions are particularly well-suited for inclusion into the training process and potentially enable a wide application of axiomatic attribution priors.
This could, for example, be used to reliably mitigate dependence on unwanted features and biases induced by the training dataset, which is a major challenge in today's ML systems.

\citet{Obermeyer447} found evidence that a widely used algorithm in the U.S. health care system contains racial biases that can be traced back to biases in the dataset that was used to develop the algorithm. Using our method to generate high-quality attributions that reflect the true behavior of an $\explainable$-DNN and an appropriate attribution prior, such problems could potentially be resolved (though more research on attribution priors is necessary). However, allowing biases to be controlled by an ML practitioner can introduce new risks. Just like datasets, humans are also not free of biases~\cite{tversky_judgment_1974}, which can potentially be reflected in such priors. We as a society need to be careful that this responsibility is not exploited and used for discriminatory or harmful purposes. One way to approach this problem for applications that affect the general public is to introduce an ethical review committee, which assesses whether the proposed priors are legitimate or reprehensible.

\acksection\label{sec:acknowledgement}
We would like to thank Jannik Schmitt for helpful mathematical advice.
This project has received funding from the
European Research Council (ERC) under the European Union’s Horizon 2020 research and innovation programme (grant agreement No.~866008).
The project has also been supported in part by the State of Hesse through the cluster projects ``The Third Wave of Artificial Intelligence (3AI)'' and ``The Adaptive Mind (TAM)''.

{
\small
\bibliographystyle{abbrvnat}
\bibliography{bibtex/short,bibtex/papers,bibtex/external,bibtex/local}
}
\newpage
\appendix
\pagenumbering{roman}

\section{Proofs and further results}\label{app:sec:proofs}

\begin{proof}[Proof details for Proposition 3.2]
In the proof of Proposition 3.2, we make use of the property that the derivative of a $k^\text{th}$-order homogeneous and differentiable function $F$ is a $(k-1)^\text{st}$-order homogeneous function, \ie,
\begin{equation}\label{eq:01}
     \frac{\partial F(\alpha x)}{\partial \alpha x_i} =  \alpha^{k-1} \frac{\partial F(x)}{\partial x_i},
\end{equation}
see, \eg, Corollary 4 in~\citelatex{Border:2000:ETH}.
Assuming $k^\text{th}$-order homogeneity of $F$ and using the chain-rule, the above \cref{eq:01} follows from
\begin{equation}
    \alpha \frac{\partial F(\alpha x)}{\partial \alpha x_i} = \frac{\partial F(\alpha x)}{\partial \alpha x_i} \frac{\partial \alpha x_i}{\partial x_i} = \frac{\partial F(\alpha x)}{\partial x_i} = \frac{\partial \alpha^k F(x)}{\partial x_i} = \alpha^k \frac{\partial F(x)}{\partial x_i}.
\end{equation}
\end{proof}

\begin{proof}[Proof of Proposition 3.8]
For any input $z \in \R^n$, $\alpha \in \R_{\geq 0}$, and piecewise linear activation function $\activationfun_l$ according to \cref{eq:pw_linear_activation}, we want to show that 
\begin{equation}
\alpha \activationfun_l(z) = \activationfun_l(\alpha z).    
\end{equation} 
For $\alpha=0$, both sides evaluate to 0 and the equality holds. 
For $\alpha>0$, the equality holds as long as the active interval of the activation function does not change. 
The active interval changes either when the sign of the input is changed, \ie,~it goes from positive to negative or vice versa, or when a positive input changes to $0$, or a value of $0$ changes to positive. Since $\alpha>0$, a multiplication with $\alpha$ can neither change the sign nor can make positive values $0$ or $0$ values positive.
Therefore, scaling the input with $\alpha>0$ changes none of the active activation function intervals and nonnegative homogeneity for $\alpha \in \R_{\geq 0}$ holds.
\end{proof}

\begin{proof}[Proof of Proposition 3.9]
For any input $z \in \R^n$, $\alpha \in \R_{\geq 0}$, and pooling function $\pool_l$ with the assumed properties, we want to show that 
\begin{equation}\label{eq:pool}
\alpha \pool_l(z) = \pool_l(\alpha z).    
\end{equation}
If the pooling function is linear, homogeneity implicitly holds.
If the pooling function is selecting values based on their relative ordering, we consider two cases. 
For $\alpha=0$, both sides evaluate to $0$ and the equality holds.
For $\alpha>0$, the relative ordering of the entries in $z$ is unchanged by a scaling with $\alpha$, hence the same entry is selected by the pooling function.
Since the value of the selected entry is scaled by $\alpha$, the above \cref{eq:pool} holds for $\alpha \in \R_{\geq 0}$ and nonnegative homogeneity is satisfied.
\end{proof}

\begin{proof}[Proof that $\explainable$-Gradient satisfies nonnegative homogeneity (Definition 3.6).]
Using \cref{eq:01} and nonnegative $1^\text{st}$-order homogeneity of any $\explainable$-DNN $F$, it follows that  
\begin{equation}
   \explainable \text{G} (F, \alpha x) =\alpha x_i \frac{\partial F(\alpha x)}{\partial \alpha x_i} = \alpha x_i \alpha^0 \frac{\partial F(x)}{\partial x_i} = \alpha \explainable \text{G} (F, x),
\end{equation}
for $\alpha \in \R_{\geq 0}$, and therefore, nonnegative homogeneity of the attribution is satisfied.
\end{proof}

\myparagraph{Axiomatic attributions.} \Cref{axioms} in the main text summarizes the axioms~\cite{Sundararajan:2017:AAD} that are satisfied by several attribution methods. For proofs of the axioms that are satisfied by Integrated Gradients, please refer to~\cite{Sundararajan:2017:AAD}. 
For proofs of the axioms that are satisfied by Expected Gradients, please refer to~\cite{Erion:2021:IPD}. 
For proofs of the axioms that are satisfied by \inputxgradient\ and Gradient, please refer to~\cite{Erion:2021:IPD, Sundararajan:2017:AAD} and see below. 
As $\explainable$-Gradient equals Integrated Gradients for $\explainable$-DNNs according to \cref{prop:01}, all the axioms satisfied by Integrated Gradients are also satisfied by $\explainable$-Gradient (for $\explainable$-DNNs). 

For Expected Gradients to satisfy the same axioms that are satisfied by Integrated Gradients, convergence must have occurred, which can only be expected after multiple gradient evaluations. To emphasize the advantage of our method when only considering attribution methods that use a single gradient evaluation, in \Cref{axioms} we also show the axioms that are satisfied by Expected Gradients~\cite{Erion:2021:IPD} when using only one reference sample, \ie,~when convergence did not yet occur. Proof sketches for the axioms satisfied by Expected Gradients with only one reference sample are as follows:
\begin{enumerate}
    \item \textit{Sensitivity (a):} Since there exist networks for which Sensitivity (a) is not satisfied by \inputxgradient{}, and Expected Gradients could choose a sample such that the approximation equals \inputxgradient, Sensitivity (a) is also not satisfied by Expected Gradients in general.
    \item \textit{Sensitivity (b):} As the gradient \wrt\ an irrelevant feature will always be zero, Sensitivity (b) is satisfied.
    \item \textit{Implementation invariance:} As Expected Gradients use stochastic sampling for the baseline, there is no guarantee that even for the same model two attributions are equal.
    \item \textit{Completeness:} Again, following the argument from Sensitivity (a), Completeness is not given.
    \item \textit{Linearity:} As Expected Gradients use a stochastic sampling for the baseline, there is no guarantee that Linearity holds.
    \item \textit{Symmetry-preserving:} Following the argument from Linearity, Symmetry-preserving does not hold.
\end{enumerate}

\myparagraph{Why is nonnegative homogeneity a desirable axiom for attribution methods?} Explainability is closely related to predictability. Knowing how a model behaves under certain changes to the input implies an understanding of the model. Therefore, axioms like \textit{linearity}~\cite{Sundararajan:2017:AAD} and \textit{nonnegative homogeneity}, which essentially describe a form of predictability, are generally desirable and allow for a more complete understanding of the model's behavior.

\myparagraph{(Input$\times$)Gradient violates Sensitivity (a).} To see that gradients and \inputxgradient ~violate Sensitivity (a), it is instructive to consider the concrete example given in~\cite{Sundararajan:2017:AAD}: Assume we have a simple ReLU network $f(x) = 1 - \text{ReLU}(1-x)$. When having a baseline $x^\prime = 0$ and an input $x = 2$, $f(x^\prime)$ respectively $f(x)$ changes from $0$ to $1$. However, as the function flattens out at $x = 1$, the above gradient-based attribution methods would yield an attribution of $0$ for the input $x=2$.

\section{Experimental details}\label{app:sec:exp}

In the following section we provide additional details to ensure reproducibility of our experiments. 
For further information, please see our public code base\footnote{\href{https://github.com/visinf/fast-axiomatic-attribution}{\nolinkurl{github.com/visinf/fast-axiomatic-attribution}}} released under an Apache License 2.0.

\subsection{Removing the bias term in DNNs}\label{app:sec:exp:bias}

The models for all reported results in \cref{sec:experiments:bias} have been trained for 100 epochs on the training split of the ImageNet~\cite{Russakovsky:2015:ILS} dataset with a batch size of 256 and using a single Nvidia A100 SXM4 (40GB) GPU. The training time per epoch is approximately 10 minutes for AlexNet, 60 minutes for VGG16, and 40 minutes for ResNet-50. For training the AlexNet and VGG models, we use the official PyTorch~\citelatex{Paszke:2017:ADP} implementation\footnote{\href{https://github.com/pytorch/examples/tree/master/imagenet}{\nolinkurl{github.com/pytorch/examples}}} that is published under a BSD 3-Clause license. 
We use an SGD optimizer with an initial learning rate of 0.01 that is decayed by a factor of 0.1 every 30 epochs, a momentum of 0.9, and a weight decay of 1e-4. 
For training the ResNet models, we use the settings proposed by~\cite{Zhang:2019:FIR} and rely on the publicly available code,\footnote{\href{https://github.com/hongyi-zhang/Fixup}{\nolinkurl{github.com/hongyi-zhang/Fixup}}} which is released under a BSD 3-Clause license. 
The hyperparameters are the same as for the AlexNet and VGG models except that we use mixup regularization~\citelatex{Zhang:2018:MBE} with an interpolation strength $\alpha=0.7$, a cosine annealing learning rate scheduler, and an initial learning rate of 0.1. 

The mean absolute relative difference between the attribution obtained from Integrated Gradients~\cite{Sundararajan:2017:AAD} and the attribution obtained from calculating \inputxgradient\ for regular DNNs resp.~$\explainable$-Gradient for $\explainable$-DNNs is calculated as
\begin{equation}
d(\attribution, X) = \frac{1}{n|X|}\sum_{x \in X}\sum_{i=1}^{n}\frac{|\text{IG}_i(F, x, \mathbf{0})-\attribution_i(F, x)|}{|\text{IG}_i(F, x, \mathbf{0})|}\ ,
\end{equation}
with $X$ denoting a dataset consisting of samples $x \in \R^n$.

\subsection{Benchmarking gradient-based attribution methods}\label{app:sec:exp:bench}

For the experimental comparison of gradient-based attribution methods in \cref{sec:experiments:metrics}, we use the models from \cref{sec:experiments:bias} (see \cref{app:sec:exp:bias} for details) and evaluate on the ImageNet validation split using a single Nvidia A100 SXM4 (40GB) GPU. 
To quantify the quality of attributions, we use the attribution quality metrics proposed by \citet{lundberg_local_2020}. 
The metrics reflect how well an attribution method captures the relative importance of features by masking out a progressively increasing fraction of the features based on their relative importance: 
\begin{description}
\item[Keep Positive Mask (KPM)] measures the attribution method's capability to find the features that lead to the greatest increase in the model's output logit of the target class. For that a progressively increasing fraction of the features is masked out, ordered by least positive to most positive attribution. Then the AUC of the resulting curve is measured. Intuitively, if an attribution reflects the true behavior of the model, unimportant features will be masked out first and the model output logit decreases only marginally, resulting in a high value for the AUC.
The other way around, when an attribution does not reflect the true behavior of the model, an important feature might be masked out too early and the target class output decreases quickly, leading to a smaller score.
\item[Keep Negative Mask (KNM)] works analogously for negative features.
This means that the better the attribution, the smaller the metric.
Note that for KPM and KNM, all negative and positive features are masked out by default, respectively. 
\item[Keep Absolute Mask (KAM)] and \textbf{Remove Absolute Mask (RAM)} work similarly but using the absolute value of the attributions and measuring the AUC of the top-1 accuracy. 
For KAM, we keep the most important features and measure the AUC of the top-1 accuracy over different fractions of masking. 
A high-quality attribution method should keep the features most important for making a correct classification, and therefore, the metric should be as high as possible. 
RAM masks out the most important features first, meaning that the accuracy should drop fast. 
Therefore, a smaller value indicates a better attribution.

\end{description}
 As we evaluate attributions for image classification models, we adapt the above metrics to work with image data. This is achieved by replacing the masked pixels with those of a blurry image, which is obtained using a Gaussian blur with a kernel size of $51\times 51$ and $\sigma=41$ applied to the original input image.
The parameters were chosen such that the resulting image is visually heavily blurred. 
This ensures that features can properly be removed.

\subsection{Training with attribution priors}\label{app:sec:exp:training}

Our experiment with attribution priors in \cref{sec:experiments:sparsity} replicates the experimental setup of~\cite{Erion:2021:IPD}. We use the original code, which includes the NHANES I dataset and is published under the MIT license.\footnote{\href{https://github.com/suinleelab/attributionpriors/tree/master/sparsity}{\nolinkurl{github.com/suinleelab/attributionpriors}}}
We use the attribution prior proposed by \citet{Erion:2021:IPD} to learn sparser models, which have improved generalizability.
The prior is defined as
\begin{equation*}
    \Omega_{sparse}(\bar{\attribution}) = - \frac{\sum_{i=1}^n\sum_{j=1}^n |\bar{\attribution_i} - \bar{\attribution_j}|}{m \sum_{i=1}^n \bar{\attribution_i}}\ ,
\end{equation*}
with $\bar{\attribution}$ denoting the mean attribution of a mini-batch with $m$ samples. 
This prior improves sparsity of the model by minimizing the statistical dispersion of the feature attributions. 
We use the following attribution methods as baselines, which are commonly used for training with attribution priors: Expected Gradients (EG), the input gradient of the \textit{log} of the output logit as proposed by~\cite{Ross:2017:RRR} (RRR), and a regular input gradient (Grad). 
We compare these methods with our novel $\explainable$-Gradient ($\explainable$G) attribution method.
For each attribution method, we perform an individual hyperparameter search to find the optimal regularization strength $\lambda \in \{0.01, 0.1, 1, 10, 100\}$. We find $\lambda=0.1$ for RRR and Grad, and $\lambda=1.0$ for $\explainable$G and EG. When training the model with Expected Gradients using more than one reference sample, we continue to use the regularization strength $\lambda$ that was found using one reference sample. 
All other hyperparameters are kept as in the original experiment of~\cite{Erion:2021:IPD}. 
To train the models, we use a Nvidia GeForce RTX 3090 (24GB) GPU.

To provide a numerical comparison of the efficiency of $\explainable$-Gradient and Expected Gradients~\cite{Erion:2021:IPD}, we report the computation time and GPU memory usage for training a ResNet-50 on the ImageNet dataset with Expected Gradients using 32 reference samples and with $\explainable$-Gradient. We use a single Nvidia A100 SXM4 (40GB) GPU and a batch size of two. The number of reference samples corresponds to the number of reference samples determined in the experiment in \cref{fig:sparsity}(right), where both networks achieve the same ROC-AUC. When using Expected Gradients for training, the GPU memory usage is $14.57$ GB while for $\explainable$-Gradient the memory usage is $4.21$ GB. The computation time per iteration, averaged over $100$ iterations, is $1.12$ s for Expected Gradients and 0.0086 for $\explainable$-Gradient. To conclude, in this scenario we observe a massive improvement in the efficiency of $\explainable$-Gradient compared to Expected Gradients. Expected Gradients requires $\sim130$ times more computation time and $\sim3.46$ times more GPU memory.

\begin{figure}[t]
  \centering
  \includegraphics[width=1.0\linewidth]{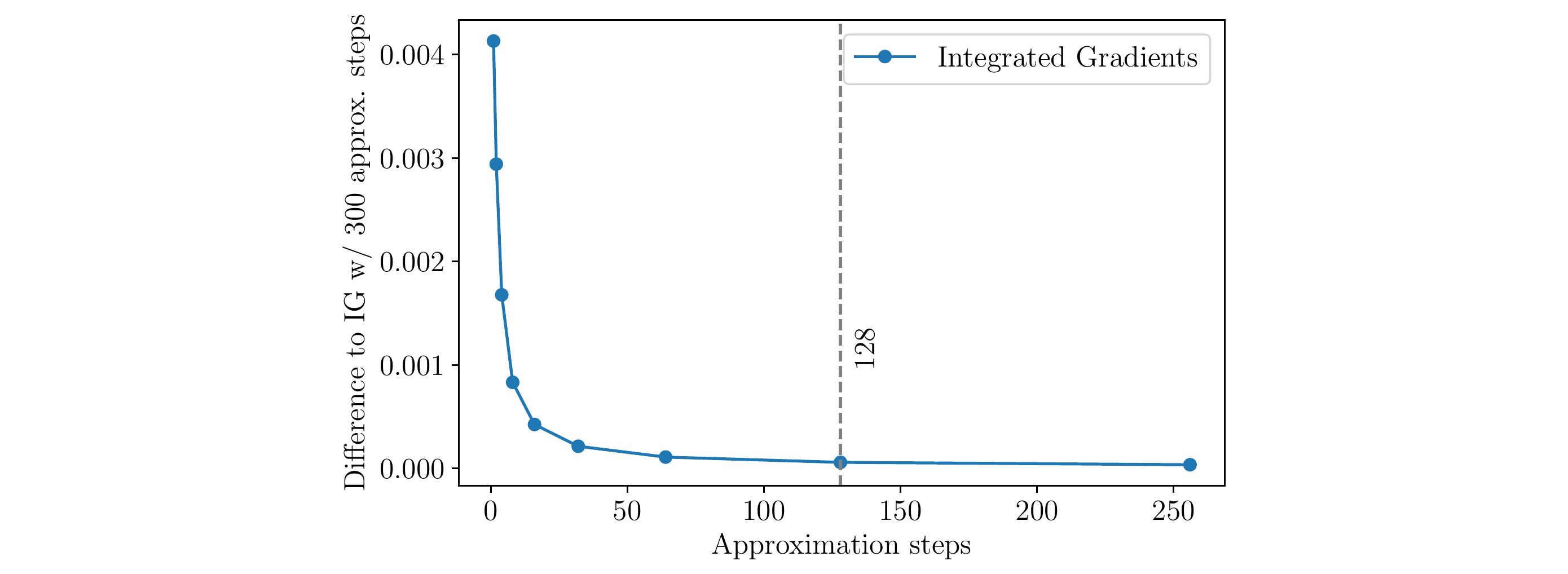}
  \caption{\textit{Convergence of Integrated Gradients~\cite{Sundararajan:2017:AAD}.} We plot the mean absolute difference of Integrated Gradients obtained by using 300 and different numbers of approximation steps for AlexNet on the ImageNet validation split. 
  We find convergence to occur after approximately 128 steps.}\label{fig:ig_conv}
\end{figure}

\subsection[Homogeneity of X-DNNs]{Homogeneity of $\explainable$-DNNs}\label{app:sec:exp:homogeneity}

For the experiment in \cref{sec:experiments:homog}, we use the same models as in \cref{sec:experiments:bias} (see \cref{app:sec:exp:bias} for details) and evaluate on the ImageNet validation split using a single Nvidia A100 SXM4 (40GB) GPU. 
Additional qualitative examples of the attributions for the output logit of the target class for a regular AlexNet and an $\explainable$-AlexNet, as in \cref{fig:exp4}(right), are shown in \cref{fig:qual}. 
Our findings in \cref{sec:experiments:homog} are consistent with the additional results. As with \cref{fig:exp4}(right) in the main paper, we observe that $\explainable$-Gradient ($\explainable$G) equals Integrated Gradients (IG) for the $\explainable$-AlexNet up to a small approximation error and that reducing the contrast of the images keeps the attribution unchanged up to a scaling factor (not visible due to normalization for display purposes). On the other hand, for the regular AlexNet the attributions obtained from \inputxgradient\ and Integrated Gradients differ and change depending on the contrast.

\subsection{Convergence of Integrated Gradients}\label{app:sec:exp:convergence}

For our experimental comparisons with Integrated Gradients~\cite{Sundararajan:2017:AAD}, we assume convergence of the method. 
To empirically find a suitable number of approximation steps, we analyze the mean absolute difference of the Integrated Gradients obtained by using $n$ and $300$ approximation steps as plotted in \cref{fig:ig_conv}. We choose $300$ steps for reference because \citet{Sundararajan:2017:AAD} report 300 steps as the upper bound for convergence.
We use the trained AlexNet model from \cref{sec:experiments:bias} (details in \cref{app:sec:exp:bias}), the ImageNet validation split, and $n \in [1,2,4,8,16,32,64,128,256]$. 
We find 128 approximation steps to be sufficient and use this number in our experiments. 

\begin{figure}
  \centering
  \includegraphics[width=1.0\linewidth]{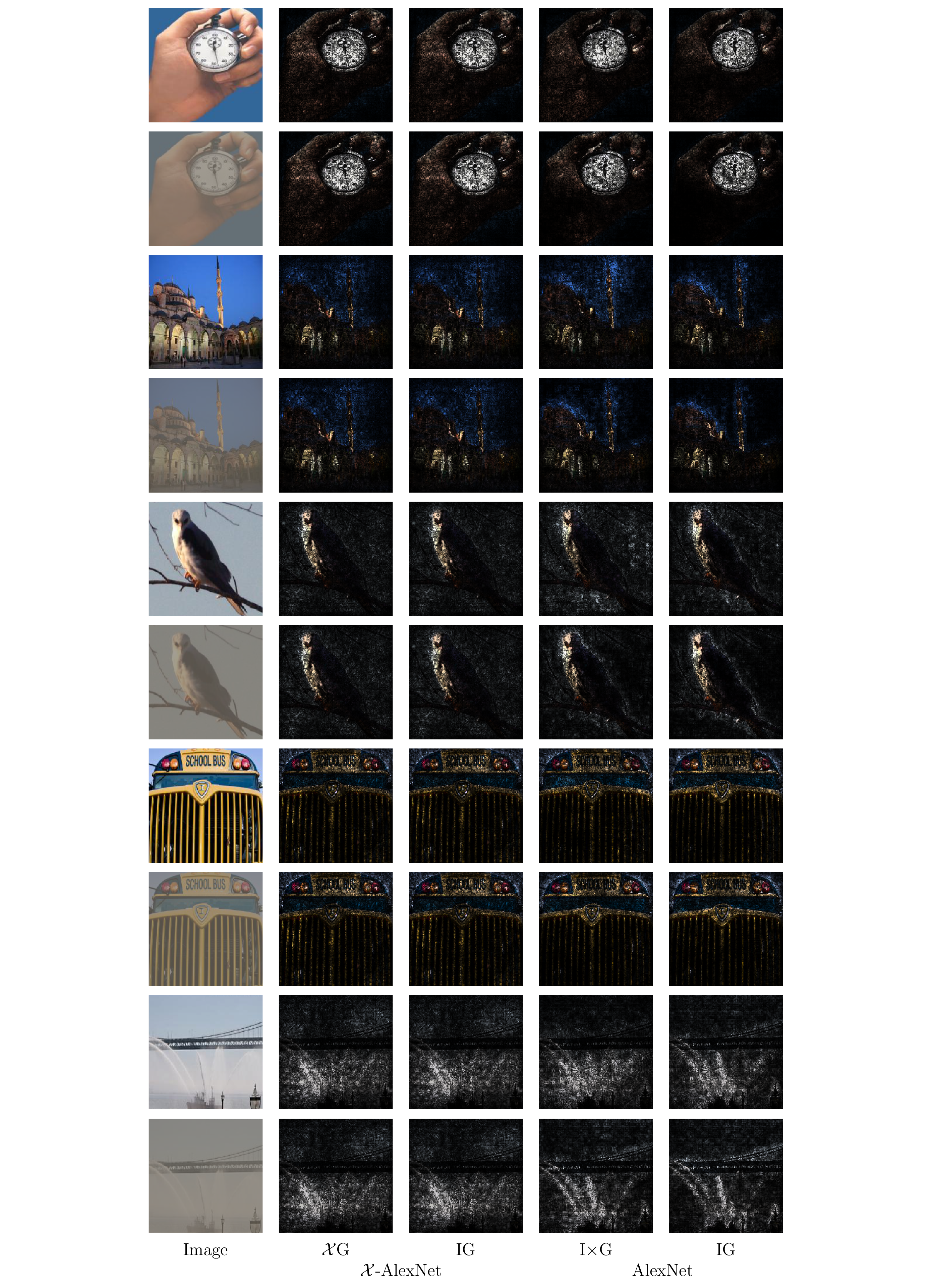}
  \caption{\textit{Qualitative examples} of normalized attributions for the output logit of the target class for $\explainable$-AlexNet and AlexNet using the attribution methods \inputxgradient\ (I$\times$G), $\explainable$-Gradient ($\explainable$G), and Integrated Gradients (IG).}\label{fig:qual}
\end{figure}

{
\small
\bibliographystylelatex{abbrvnat}
\bibliographylatex{bibtex/short,bibtex/papers,bibtex/external,bibtex/local}
}

\end{document}